\newtheorem{definition}{Definition}
\newtheorem{theorem}{Theorem}
\newcommand{\R}{\mathbb{R}}
\DeclareMathOperator*{\argmin}{arg\,min}
\title{FedAUXfdp: Differentially Private One-Shot Federated Distillation}
\author{
Haley Hoech
\and
Roman Rischke\and
Karsten M{\"u}ller\And
Wojciech Samek\\
\affiliations
Department of Artificial Intelligence, Fraunhofer Heinrich Hertz Institute\\
\emails
$\{$haley.hoech,
roman.rischke, 
karsten.mueller,
wojciech.samek$\}$@hhi.fraunhofer.de
}
\newcommand\blankfootnote[1]{%
  \begingroup
  \renewcommand\thefootnote{}\footnote{#1}%
  \addtocounter{footnote}{-1}%
  \endgroup
}
\begin{document}

\maketitle

\begin{abstract}
    Federated learning suffers in the case of ``non-iid'' local datasets, i.e., when the distributions of the clients' data are heterogeneous. One promising approach to this challenge is the recently proposed method FedAUX, an augmentation of federated distillation with robust results on even highly heterogeneous client data. FedAUX is a partially $(\epsilon, \delta)$-differentially private method, insofar as the clients' private data is protected in only part of the training it takes part in. This work contributes a \textit{fully differentially private} modification, termed FedAUX\textit{fdp}. We further contribute an upper bound on the $l_2$-sensitivity of regularized multinomial logistic regression. In experiments with deep networks on large-scale image datasets, FedAUXfdp with strong differential privacy guarantees performs significantly better than other equally privatized SOTA baselines on non-iid client data in just a single communication round. Full privatization of the modified method results in a negligible reduction in accuracy at all levels of data heterogeneity.
\end{abstract}

\section{Introduction}
Federated learning (FL)\blankfootnote{\textit{International Workshop on Trustworthy Federated Learning in conjunction with IJCAI 2022 (FL-IJCAI'22), Vienna, Austria.}} is a form of decentralized machine learning, in which a global model is formed by an orchestration server aggregating the outcome of training on a number of local client models without any sharing of their private training data~\cite{mcmahan2017communication}. Interest in federated learning has increased recently for its privacy and communication-efficiency advantages over centralized learning on mobile and edge devices~\cite{li2019federated,sattler2020clustered}. A classical mechanism for model aggregation in FL is federated averaging (FedAVG), where the locally trained models are weighted proportionally to the size of the local dataset. In each communication round of federated averaging, weight updates of the clients' local models are sent to the orchestration server, averaged by the server, and the average is sent back to the federation of clients to initialize the next round of training~\cite{mcmahan2017communication}. 

Federated ensemble distillation (FedD), an often even more communication-efficient and accurate alternative to FedAVG, uses knowledge distillation to transfer knowledge from clients to server~\cite{itahara2020distill,lin2020ensemble,chen2020feddistill,sattler2020communication}. In FedD, clients and server share a public dataset auxiliary to the clients' private data. The clients communicate the output of their privately trained models on the public distillation dataset to the server, which uses the average of these outputs as supervision for the distillation data in training the global model. In comparison to federated averaging, federated ensemble distillation offers additional privacy, as direct white box attacks are not possible for example, and allows combining different model architectures, making it appealing in an Internet-of-Things ecosystem~\cite{li2020covergence,chang2019cronus,li2021fedh2l}. 

FedAUX is an augmentation of federated distillation, which derives its success from taking full advantage of the AUXiliary data. FedAUX uses this auxiliary data for model pretraining and relevance weighting. To perform the weighting, the clients' output on each data point of the distillation dataset is individually weighted by a measure of similarity between that distillation datapoint and the client's local data, called a `certainty score'. Weighting the outputs by the scores prioritizes votes from clients whose local data is more similar to the auxiliary/distillation data. 

A major challenge of federated learning is performance when the distributions of the clients' data are heterogeneous, i.e. performance on ``non-iid'' data, as is often the circumstance in real-world applications of FL~\cite{kairouz2020advances}. FedAUX overcomes that challenge, performing remarkably more efficiently on non-iid data than other state-of-the-art federated learning methods, federated averaging, federated proximal learning, Bayesian federated learning, and federated ensemble distillation. For example on MobilenetV2, FedAUX achieves 64.8\% server accuracy, while even the second-best method only achieves 46.7\%~\cite{sattler2021fedaux}.

Despite its privacy benefits, federated distillation still presents a privacy risk to clients participating~\cite{papernot2017semisupervized}. Data-level differential privacy protects the clients' data by limiting the impact of any individual datapoint on the model and quantifies the privacy loss associated to participating in training with parameters $(\epsilon, \delta)$. Both governments and private institutions are increasingly interested in securing their data using differential privacy. 

Each client in the FedAUX method trains two models on their local dataset. \cite{sattler2021fedaux} privatize only the scoring model, leaving the classification model exposed. The clients' data is accordingly only protected with differential privacy in part of the training it participates in. In this work, we add a local, data-level $(\epsilon, \delta)$-differentially private mechanism for this second model and appropriately modify the FedAUX method to apply said mechanism. We thereby contribute a `fully' privatized version of FedAUX. Full differential privacy here is used as a way of describing our contribution of privacy over the original FedAUX method. We additionally give an upper bound on the $l_2$-sensitivity of regularized multinomial logistic regression. See Section \ref{sec:fdp_priv} for background on differential privacy.

In results with deep neural networks on large scale image datasets at an ($\epsilon = 0.6, \delta = 2*10^{-5}$) level of differential privacy we compare fully differentially private FedAUXfdp with two privatized baselines, federated ensemble distillation and federated averaging in a single communication round. FedAUXfdp outperforms these baselines dramatically on the heterogeneous client data. Our method modifications achieve better results than FedAUX in a single communication round and we see a negligible reduction in accuracy of applying this strong amount of differential privacy to the modified FedAUX method. 

In Section~\ref{sec:fedaux} we outline the original FedAUX, in Section~\ref{sec:exten} we explain our modification, including our privacy mechanism as well as background on differential privacy, and in Section~\ref{sec:experiment} we detail the experimental set-up and highlight important results. 

\section{Related Work}
Our method modifies~\cite{sattler2021fedaux}, who contributed a semi-differentially private FedAUX method. For a discussion of works related to the non-privacy aspects of FedAUX, we refer to their paper.

Cynthia Dwork introduced differential privacy~\cite{dwork2014algorithmic} and~\cite{kasiviswanathan2008dp} local differential privacy. Differential privacy bounds were greatly improved with the introduction of the moments accountant in~\cite{abadi2016deep}. 

In addition to quantifying privacy loss, differential privacy protects provably against membership inference attacks~\cite{shokri2017mi,choquette2021label}, in which an adversary can determine if a data point participated in the training of a model. This can pose a privacy threat, for example, if participation in model training could imply a client has a particular disease or other risk factor. Alternatives for privatization in general include secure multi-party computation or homomorphic encryption, though neither protect against membership inference attacks~\cite{shokri2017mi}. 
Others have combined local differential privacy and federated learning, notably~\cite{geyer2018dpfl,mcmahan2018dp}. While~\cite{sun2020federated} combined federated model distillation with differential privacy, they only attain robust results on non-iid data when the client and distillation data contains the same classes.

Our one-shot federated distillation approach uses model distillation to transfer the learning outcome in form of softlabels for a public distillation dataset. \cite{ZhouEtAl2021} use dataset distillation \cite{WangEtAl2018} to design a communication-efficient and privacy-preserving one-shot FL mechanism.

\section{FedAUX}
\label{sec:fedaux}

\subsection{Method}

In FedAUX, there are two actors, the clients and the orchestration server. Each client, $i=1,\ldots,n$, has its own private, local, labeled dataset $D_i$. Auxiliary to the client data, is a public, unlabeled dataset $D_{aux}$. The auxiliary data is further split into the negative data $D^-$, used in training the certainty score models, and the distillation data $D_{distill}$, used for knowledge distillation. 

There are three types of models, the clients' scoring models, the clients' classification models, and the server's global model, which can all be decomposed into a feature extractor $h$ and linear or logistic regression classification head. Whether the full model or just the classification head is trained varies by model and we outline this next. In FedAUX four kinds of training are conducted (see Figure~\ref{fig:method}):

 \begin{figure}[ht]
    \centering
    \includegraphics[width=\linewidth]{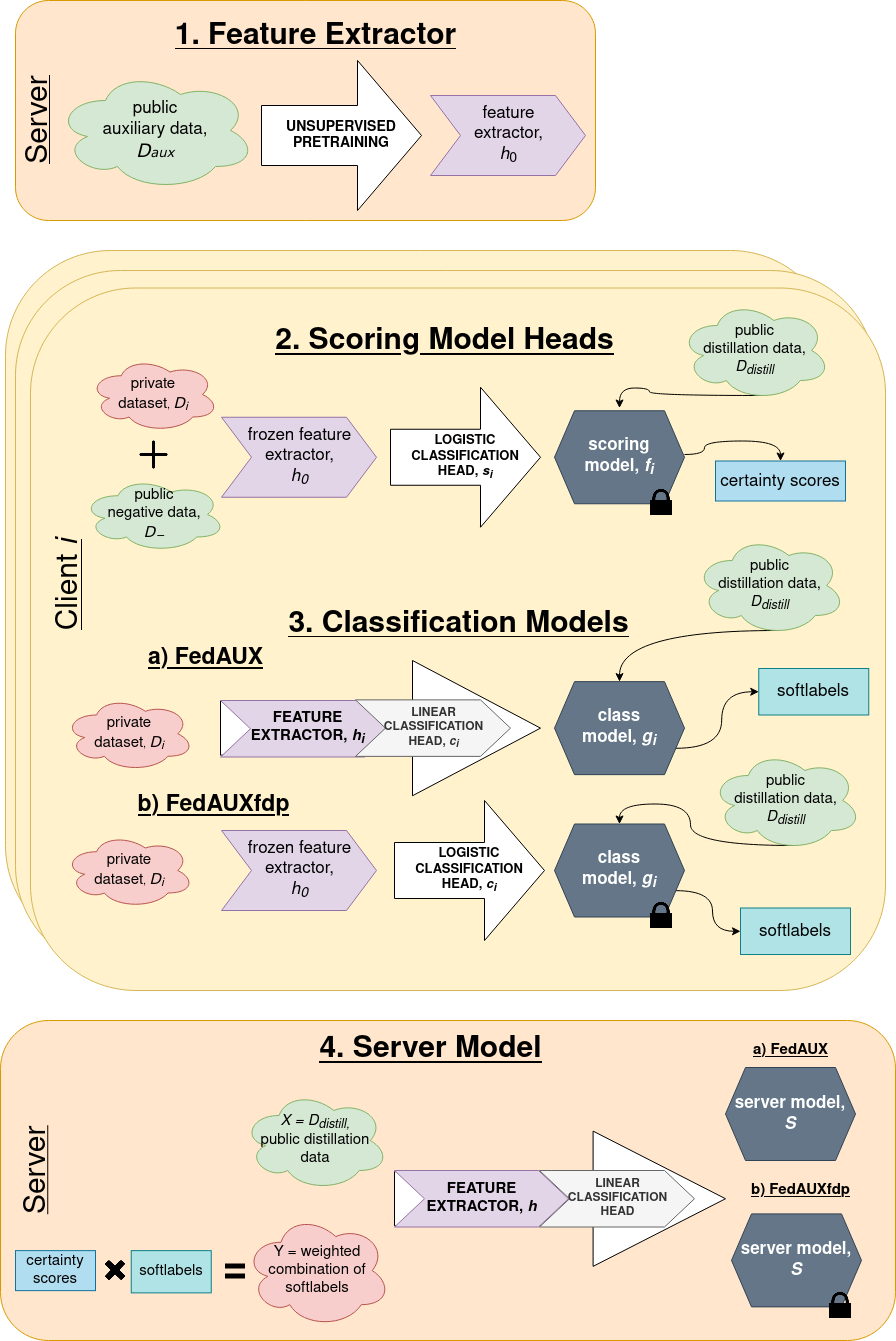}
    \caption{Overview of FedAUX and FedAUXfdp training. Models secured with differential privacy indicated with black locks.}
   \label{fig:method}
\end{figure}

\begin{enumerate}
    \item \textbf{Feature extractor.} Unsupervised pretraining with the public auxiliary data $D_{aux}$ on the server to obtain the feature extractor, $h_0$, which is sent to the clients and initialized in all their models as well as the server's.
    \item \textbf{Scoring model heads.}  Supervised training of the scoring model classification heads $s_i$ of all clients, in combination with the frozen feature extractor $h_0$ to generate scoring models $f_i = s_i \circ h_0$. Each training is a binary logistic regression on the extracted features of their private local data and the public negative data $h_0(D_i \cup D^-)$. 
    \item \textbf{Classification models.} Supervised training of the clients' full classification models $g_i = c_i \circ h_i$, consisting of a feature extractor $h_i$ (initialized with $h_0$ from the pretraining) and linear classification head $c_i$, on their local datasets $D_i$. 
    \item \textbf{Server model.} Supervised training of the server's full model $S$, consisting of a feature extractor $h$ (initialized with $h_0$ from the pretraining) and linear classification head. The server calculates an initial weight update of the clients' average class model weight updates from their training round. For the server's training, the input data $X$ is the unlabeled $D_{distill}$ and the supervision $Y$ a $(|D_{distill}| \times n_{classes})$-dimensional matrix of the softlabel output of the class model $g_i(D_{distill})$, weighted by a certainty score for each distillation datapoint. The certainty scores are the output of the $(\epsilon,\delta)$-differentially privatized scoring model on the distillation data $f_i(D_{distill})$, measures of similarity between each distillation data point and the client's local data. Each entry in $Y$ is:

\begin{equation}
    \frac{\sum_i f_i(x) \cdot g_i(x)}{\sum_i f_i(x)}, \text{for } x \in D_{distill}.
\end{equation}
\end{enumerate}

\subsection{Privacy}

Participating in the training of the scoring classification heads and classification models presents a privacy risk to the private data of the clients. In FedAUX, the scoring heads are sanitized using an $(\epsilon, \delta)$-differentially private sanitization mechanism. FedAUX's mechanism for privatizing the scoring model is based on freezing the feature extractor and using a logistic classification head. As the feature extractor was trained on public data, only sanitizing this head is required to yield a differentially private model. Further, using the L-BFGS optimizer in sci-kit learn's logistic regression guarantees finding optimal weights for the logistic regression heads. In FedAUXfdp we privatize the classification models in a similar fashion. This thereby makes the server models learned in FedAUXfdp fully differentially private, as discussed in Section \ref{sec:exten}, with the specific privacy mechanism outlined in Section~\ref{sec:fdp_priv}. 

\section{FedAUXfdp}
\label{sec:exten}
In the fully differentially private version of FedAUX, we adapt the training of the classification and server models as follows. Rather than training the full client models, we freeze the feature extractors and train only the classification heads using a multinomial logistic regression on extracted features of the client's local dataset $D_i$. As communicating model updates to the server poses a privacy threat, we no longer initialize the server with the averaged weight update of the clients. Accordingly, step three in the process is changed as follows:

\begin{enumerate}
    \setcounter{enumi}{2}
    \item \textbf{Classification model.} Supervised training of the classification model heads $c_i$ of the clients, combined with the frozen feature extractor $h_0$, to generate class models $g_i = c_i \circ h_0$. Each is a multinomial logistic regression on the extracted features of their private local data $h_0(D_i)$. See Figure \ref{fig:method}.
\end{enumerate}

As with the scoring models in the original FedAUX, freezing the feature extractors, which have been trained on public data, allows us to make the models differentially private by simply sanitizing the classification heads. Again, we opt for logistic classification heads because the L-BFGS optimizer in sci-kit learn's logistic regression guarantees convergence to globally optimal weights of the logistic regression. 

 We formulate the training of these classifiers as regularized empirical risk minimization problems.

\subsection{Regularized Empirical Risk Minimization}
\label{sec:erm}

Let $\boldsymbol{\beta} := (\boldsymbol{\beta}^T_1, \ldots, \boldsymbol{\beta}^T_C)^T \in \R^{C(p+1)}$ with $\boldsymbol{\beta}_k := (\beta_{k,0}, \ldots, \beta_{k,p})^T \in \R^{p+1}$ be the vector of trainable parameters of  the regularized multinomial logistic regression problem with $C$ classes
\begin{equation}\label{multilogreg}
  \min_{\boldsymbol{\beta}} J(\boldsymbol{\beta},h,D) = \frac{1}{|D|} \sum_{i=1}^{|D|} -\log(p_{y_i}(h(\textbf{x}_i))) + \frac{\lambda}{2} \| \boldsymbol{\beta}\|_2^2
\end{equation} 
with softmax function \[p_{y_i}(\boldsymbol{\beta}, h(\textbf{x}_i)) = \frac{\exp(\boldsymbol{\beta}_{y_i}^T h(\textbf{x}_i))}{\sum_{k=1}^C \exp(\boldsymbol{\beta}_k^T h(\textbf{x}_i))} \]
for a labeled data point $(\textbf{x}_i, y_i)$ from a dataset $D$. 

Thereby, $h(\textbf{x}_i) \in \R^{p+1}$ is an extracted feature vector with the first coordinate being a constant for the bias term $\beta_{k,0}$, and $y_i\in \{1,\ldots,C\}$ the corresponding class label. We assume w.l.o.g. that 
\begin{equation}\label{eq:assumption}
  \|h(\textbf{x})\|_2 \leq 1.
\end{equation}

To fulfill this assumption, we normalize the input features for the logistic regression problem as follows
\begin{equation}
    \Tilde{h}(\textbf{x}) := h(\textbf{x})\Big(\operatorname*{max}_{\textbf{x} \in D} \Vert h(\textbf{x})\Vert_2 \Big)^{-1}.
\end{equation}

\subsection{Privacy}
\label{sec:fdp_priv}
We privatize the classification models using $(\epsilon, \delta)$-differential privacy. Informally, differential privacy anonymizes the client data in this context, insofar as with very high likelihood the results of the model would be very similar regardless whether or not a particular data point participates in training~\cite{dwork2014algorithmic}.
\subsubsection{Definitions}

\begin{definition}
A randomized mechanism $\mathcal{M}: \mathcal{D} \rightarrow \mathcal{R}$ satisfies $(\epsilon, \delta)$-differential privacy, if for any two adjacent inputs $D_1$ and $D_2$ that only differ in one element and for any subset of outputs $S \subseteq \mathcal{R},$  
$$P[D_1 \in S] \leq \exp(\epsilon)P[\mathcal{M}(D_2)\in S] + \delta.$$
\end{definition} 

We use the Gaussian mechanism, in which a specific amount of Gaussian noise is added relative to the $l^2$-\textit{sensitivity}~\cite{dwork2014algorithmic} and according to pre-selected $\epsilon$ and $\delta$ values.

\begin{definition}
For $\epsilon \in (0,1)$, $c^2 > 2\ln(1.25/\delta)$, the Gaussian Mechanism with parameter $\sigma \geq c\Delta_2(\mathcal{M})/\epsilon$ is $(\epsilon,\delta)$-differentially private.
\end{definition}
\begin{definition}
For two datasets, $D_1, D_2$ differing in one datapoint, the $l_2$-sensitivity is  $$\Delta(\mathcal{M}) = \operatorname*{max}_{D_1,D_2 \in \mathcal{D}}\Vert \mathcal{M}(D_1) - \mathcal{M}(D_2)\Vert_2 $$
\end{definition}

\subsubsection{Sensitivity of the Classification Models}

We contribute the following theorem for the $l_2$-sensitivity of regularized multinomial logistic regression \eqref{multilogreg}, which generalizes a corollary from~\cite{chaudhuri2011differentially}.  

\begin{theorem}
\label{theorem1}
The $l_2$-sensitivity of regularized multinomial logistic regression, as defined in \eqref{multilogreg}, is at most $\frac{2\sqrt{C}}{\lambda |D|}$. 
\end{theorem}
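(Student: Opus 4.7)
The plan is to mirror the proof of Corollary~11 (the binary logistic case) in \cite{chaudhuri2011differentially} and generalize it to $C$ classes. Let $D_1$ and $D_2$ be two adjacent datasets of size $n:=|D|$ differing only in one labeled point, and let $\boldsymbol{\beta}^{(1)}$ and $\boldsymbol{\beta}^{(2)}$ be the unique minimizers of $J(\,\cdot\,,h,D_1)$ and $J(\,\cdot\,,h,D_2)$, respectively (uniqueness coming from strict convexity of the regularizer). The goal is to bound $\|\boldsymbol{\beta}^{(1)} - \boldsymbol{\beta}^{(2)}\|_2$ directly, since this is exactly $\Delta_2(\mathcal{M})$ when $\mathcal{M}$ outputs the regularized ERM solution.

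The first step is to exploit the $\lambda$-strong convexity of $J$ inherited from the $\tfrac{\lambda}{2}\|\boldsymbol{\beta}\|_2^2$ term: for the function $G_1(\boldsymbol{\beta}) := J(\boldsymbol{\beta},h,D_1)$,
\begin{equation*}
\langle \nabla G_1(\boldsymbol{\beta}^{(2)}) - \nabla G_1(\boldsymbol{\beta}^{(1)}),\,\boldsymbol{\beta}^{(2)} - \boldsymbol{\beta}^{(1)}\rangle \geq \lambda\,\|\boldsymbol{\beta}^{(2)} - \boldsymbol{\beta}^{(1)}\|_2^2.
\end{equation*}
Next, I would use the first-order optimality conditions $\nabla G_1(\boldsymbol{\beta}^{(1)})=0$ and $\nabla G_2(\boldsymbol{\beta}^{(2)})=0$. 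Since $G_2 - G_1 = \tfrac{1}{n}(\ell_{\text{new}} - \ell_{\text{old}})$ where $\ell_{\text{new}}, \ell_{\text{old}}$ are the negative log-softmax losses of the swapped points, we get $\nabla G_1(\boldsymbol{\beta}^{(2)}) = -\tfrac{1}{n}\bigl(\nabla \ell_{\text{new}}(\boldsymbol{\beta}^{(2)}) - \nabla \ell_{\text{old}}(\boldsymbol{\beta}^{(2)})\bigr)$. Combining with Cauchy--Schwarz yields
\begin{equation*}
\|\boldsymbol{\beta}^{(2)}-\boldsymbol{\beta}^{(1)}\|_2 \leq \frac{1}{\lambda n}\,\bigl\|\nabla \ell_{\text{new}}(\boldsymbol{\beta}^{(2)}) - \nabla \ell_{\text{old}}(\boldsymbol{\beta}^{(2)})\bigr\|_2.
\end{equation*}

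The remaining work, and the only genuinely multi-class-specific part, is a per-point gradient bound. A direct calculation gives $\nabla_{\boldsymbol{\beta}_k}\ell(\boldsymbol{\beta}) = \bigl(p_k(\boldsymbol{\beta},h(\mathbf{x})) - \mathbb{1}[k=y]\bigr)\,h(\mathbf{x})$, so the stacked gradient has squared norm
\begin{equation*}
\|\nabla \ell(\boldsymbol{\beta})\|_2^{2} = \|h(\mathbf{x})\|_2^{2}\sum_{k=1}^{C}\bigl(p_k - \mathbb{1}[k=y]\bigr)^{2}.
\end{equation*}
Using $\|h(\mathbf{x})\|_2\le 1$ from \eqref{eq:assumption} and the crude per-coordinate bound $(p_k-\mathbb{1}[k=y])^2 \leq 1$ gives $\|\nabla \ell(\boldsymbol{\beta})\|_2 \leq \sqrt{C}$ for any single-point loss. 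The triangle inequality then yields $\|\nabla \ell_{\text{new}} - \nabla \ell_{\text{old}}\|_2 \leq 2\sqrt{C}$, and plugging back produces $\|\boldsymbol{\beta}^{(2)}-\boldsymbol{\beta}^{(1)}\|_2 \leq \tfrac{2\sqrt{C}}{\lambda n}$, which is the claimed bound.

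The main obstacle is really just the per-point gradient bound: one has to be careful that the stacked gradient lives in $\R^{C(p+1)}$, so the naive triangle inequality over $C$ blocks could give an extra factor. Using the probability-simplex structure would in fact give the tighter $\sqrt{2}$ bound instead of $\sqrt{C}$, but the looser $\sqrt{C}$ bound is what yields exactly the stated constant and matches the form of the binary corollary in \cite{chaudhuri2011differentially}. Everything else (strong convexity, optimality, Cauchy--Schwarz) is standard and should transcribe directly.
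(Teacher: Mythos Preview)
Your proposal is correct and follows essentially the same route as the paper: $\lambda$-strong convexity of $J$ plus the first-order optimality conditions plus Cauchy--Schwarz reduce the sensitivity to a per-point gradient bound, and then each of the $C$ blocks of $\nabla\ell$ is bounded by $\|h(\mathbf{x})\|_2\le 1$ to obtain $\|\nabla\ell\|_2\le\sqrt{C}$. Your unified gradient expression $(p_k-\mathbb{1}[k=y])\,h(\mathbf{x})$ is a slightly cleaner packaging of the paper's case split into $k=y$ and $k\neq y$, and your side remark that the simplex structure actually gives $\sqrt{2}$ rather than $\sqrt{C}$ is a valid sharpening the paper does not mention.
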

\begin{proof}
W.l.o.g. we set $h(\textbf{x})=\textbf{x}$ in this proof and omit the argument $h$ in the definition of $J$ for ease of exposition. Let $D=\{(\textbf{x}_1, y_1), \ldots, (\textbf{x}_N, y_N)\}$ and $D'= (D\setminus\{(\textbf{x}_N, y_N)\})\cup \{(\textbf{x}'_N, y'_N)\}$. That is, $D$ and $D'$ differ in exactly one data point. Furthermore, let
\begin{eqnarray}
  \boldsymbol{\beta}_1^* &=& \argmin_{\boldsymbol{\beta}} J(\boldsymbol{\beta}, D) \\ \boldsymbol{\beta}_2^* &=& \argmin_{\boldsymbol{\beta}} J(\boldsymbol{\beta}, D').
\end{eqnarray}
The goal is to show that $\|\boldsymbol{\beta}_1^* - \boldsymbol{\beta}_2^*\|_2 \leq \frac{2\sqrt{C}}{\lambda N}$. We define 
\begin{eqnarray}
  d(\boldsymbol{\beta}) &:=& J(\boldsymbol{\beta}, D') - J(\boldsymbol{\beta}, D) \\
  &=& \frac{1}{N}\left( l(\boldsymbol{\beta}, \textbf{x}'_N) - l(\boldsymbol{\beta}, \textbf{x}_N)\right),\notag
\end{eqnarray}
with the log-softmax loss function 
\begin{eqnarray}
  l(\boldsymbol{\beta}, \textbf{x}) := -\log(p_y(\boldsymbol{\beta}, \textbf{x}))
\end{eqnarray}
for an arbitrary data point $(\textbf{x}, y)$.

With 
\begin{eqnarray}
  \nabla_{\boldsymbol{\beta}} \ l(\boldsymbol{\beta}, \textbf{x}) &=& -\frac{\nabla_{\boldsymbol{\beta}} \ p_y(\boldsymbol{\beta}, \textbf{x})}{p_y(\boldsymbol{\beta}, \textbf{x})} 
\end{eqnarray}
we obtain 
\begin{eqnarray}
  \frac{\partial \ p_k(\boldsymbol{\beta}, \textbf{x})}{\partial \ \boldsymbol{\beta}_k} &=& \frac{\exp(\boldsymbol{\beta}_k^T \textbf{x}) \cdot \sum_{j\neq k} \exp(\boldsymbol{\beta}_j^T \textbf{x})}{(\sum_{j} \exp(\boldsymbol{\beta}_j^T \textbf{x}))^2} \textbf{x} \\
  \frac{\partial \ p_k(\boldsymbol{\beta}, \textbf{x})}{\partial \ \boldsymbol{\beta}_{\ell\neq k}} &=& - \frac{\exp(\boldsymbol{\beta}_k^T \textbf{x})\cdot \exp(\boldsymbol{\beta}_\ell^T \textbf{x})}{(\sum_{j} \exp(\boldsymbol{\beta}_j^T \textbf{x}))^2} \textbf{x} \\
  \frac{\partial \ l(\boldsymbol{\beta}, \textbf{x})}{\partial \ \boldsymbol{\beta}_{k=y}} &=& \frac{\sum_{j\neq k} \exp(\boldsymbol{\beta}_j^T \textbf{x})}{\sum_{j} \exp(\boldsymbol{\beta}_j^T \textbf{x})}\textbf{x}\label{eq:grad1}\\
  \frac{\partial \ l(\boldsymbol{\beta}, \textbf{x})}{\partial \ \boldsymbol{\beta}_{\ell \neq y}} &=& -\frac{\exp(\boldsymbol{\beta}_\ell^T \textbf{x})}{\sum_{j} \exp(\boldsymbol{\beta}_j^T \textbf{x})}\textbf{x}.\label{eq:grad2}
\end{eqnarray}
Note, that the factors on the rhs of \eqref{eq:grad1} and \eqref{eq:grad2} have absolute values of at most 1. Hence, we can bound
\begin{eqnarray}
  \|\nabla_{\boldsymbol{\beta}} \ d(\boldsymbol{\beta})\|_2 &=& \frac{1}{N}\|\nabla_{\boldsymbol{\beta}} \ l(\boldsymbol{\beta}, \textbf{x}'_N) - \nabla_{\boldsymbol{\beta}} \ l(\boldsymbol{\beta}, \textbf{x}_N )\|_2 \\
  &\leq & \frac{1}{N}\left(\|\nabla_{\boldsymbol{\beta}} \ l(\boldsymbol{\beta}, \textbf{x}'_N)\|_2 + \|\nabla_{\boldsymbol{\beta}} \ l(\boldsymbol{\beta}, \textbf{x}_N )\|_2 \right)\notag\\
  &\leq & \frac{1}{N} \left(\sqrt{C}\|\textbf{x}'_N\|_2 + \sqrt{C}\|\textbf{x}_N\|_2 \right)\notag\\
  &\leq & \frac{2\sqrt{C}}{N},\notag
\end{eqnarray}
where the last inequality follows from assumption \eqref{eq:assumption} that $\|\textbf{x}\|_2 \leq 1$.

We observe that due to the convexity of $l(\boldsymbol{\beta}, \textbf{x})$ in $\boldsymbol{\beta}$ and the 1-strong convexity of the $l_2$-regularization term in \eqref{multilogreg}, $J(\boldsymbol{\beta},D)$ is $\lambda$-strongly convex. Hence, we obtain by Shalev-Shwartz inequality \cite{shalev2007online}
\begin{eqnarray}
  \left(\nabla_{\boldsymbol{\beta}} \ J(\boldsymbol{\beta_1^*},D) - \nabla_{\boldsymbol{\beta}} \ J(\boldsymbol{\beta_2^*},D)\right)^T \left( \boldsymbol{\beta_1^*} - \boldsymbol{\beta_2^*}\right) \geq \notag \\ \lambda \| \boldsymbol{\beta_1^*} - \boldsymbol{\beta_2^*} \|_2^2. 
\end{eqnarray}
Moreover, by construction of $d(\boldsymbol{\beta}),$
\begin{equation}
  J(\boldsymbol{\beta_2^*},D) + d(\boldsymbol{\beta_2^*}) = J(\boldsymbol{\beta_2^*},D').
\end{equation}
By optimality of $\boldsymbol{\beta_1^*}$ and $\boldsymbol{\beta_2^*}$, it holds
\begin{eqnarray}
  \textbf{0} = \nabla_{\boldsymbol{\beta}} \ J(\boldsymbol{\beta_1^*},D) &=& \nabla_{\boldsymbol{\beta}} \ J(\boldsymbol{\beta_2^*},D') \\ &=& \nabla_{\boldsymbol{\beta}} \ J(\boldsymbol{\beta_2^*},D) + \nabla_{\boldsymbol{\beta}} \ d(\boldsymbol{\beta_2^*}). \notag
\end{eqnarray}
Applying the Cauchy-Schwartz inequality finally leads to
\begin{eqnarray}
 &&\|\boldsymbol{\beta_1^*} - \boldsymbol{\beta_2^*} \|_2 \cdot  \| \nabla_{\boldsymbol{\beta}} \ d(\boldsymbol{\beta_2^*})\|_2 \geq  \left( \boldsymbol{\beta_1^*} - \boldsymbol{\beta_2^*}\right)^T \nabla_{\boldsymbol{\beta}} \ d(\boldsymbol{\beta_2^*}) \notag\\
  &=& \left( \boldsymbol{\beta_1^*} - \boldsymbol{\beta_2^*}\right)^T \left( \nabla_{\boldsymbol{\beta}} \ J(\boldsymbol{\beta_1^*},D) - \nabla_{\boldsymbol{\beta}} \ J(\boldsymbol{\beta_2^*},D)\right) \notag\\
  &\geq &  \lambda \|\boldsymbol{\beta_1^*} - \boldsymbol{\beta_2^*} \|_2^2,
\end{eqnarray}
which concludes the proof, since
\begin{equation}
  \|\boldsymbol{\beta_1^*} - \boldsymbol{\beta_2^*} \|_2 \leq \frac{\| \nabla_{\boldsymbol{\beta}} \ d(\boldsymbol{\beta_2^*})\|_2}{\lambda} \leq \frac{2\sqrt{C}}{\lambda N}.
\end{equation} 
\end{proof}

We remark that in the binary case ($C=2$) one regression head parameterized by $\boldsymbol{\beta} \in \R^{(p+1)}$ suffices, resulting in an $l_2$-sensitivity of at most $\frac{2}{\lambda |D|}$.

\subsubsection{Private Mechanism}

Using Theorem \ref{theorem1} and the Gaussian mechanism, we get our $(\epsilon,\delta)$-differentially private mechanism for sanitizing the multinomial classification models as follows:

$$\mathcal{M}_{priv}(D) = \mathcal{M}(D) + \mathcal{N}(0,I\sigma^2),\text{ where }$$ 
$$\sigma^2 = \frac{8Cln(1.25\delta^{-1})}{\epsilon^2\lambda^2|D|^2}$$

This leads to the overall training procedure for the classification models described in Algorithm~\ref{alg}.

\begin{algorithm}[t]
\caption{Classification model training and privatization}
\begin{algorithmic}\label{alg}
\FOR{each client } 
    \STATE $\boldsymbol{\beta}^* \rightarrow \operatorname*{argmin}_{\boldsymbol{\beta}} J(\boldsymbol{\beta},h,D) $
    \STATE $\sigma^2 \rightarrow \frac{8C\ln(1.25\delta^{-1})}{\epsilon^2\lambda^2(|D|)^2}$
    \STATE $\boldsymbol{\beta}^* \rightarrow \boldsymbol{\beta}^* +  \mathcal{N}(0,I\sigma^2)$
\ENDFOR
\end{algorithmic}
\end{algorithm}

\subsection{Cumulative Privacy Loss}
\label{sec:fdp}
By the composability and post-processing properties of differentially private mechanisms~\cite{dwork2014algorithmic}, the cumulative privacy loss for an individual client's dataset in training of the server's model is equal to the sum of the loss of the scoring and classification models. The server model is $(\epsilon,\delta)$-differentially private, where

$$\epsilon = \epsilon_{scores} + \epsilon_{classes}$$
$$\delta = \delta_{scores} + \delta_{classes}$$

\section{Experiments}
\label{sec:experiment}

We ran experiments on large-scale convolutional, ShuffleNet-~\cite{zhang2018ShuffleNet}, MobileNet-~\cite{sandler2018mobilenetv2},  and ResNet-style~\cite{he2016deep} networks, using CIFAR-10 as local client data and both STL-10 and CIFAR-100 as auxiliary data. Of the auxiliary data, 80\% is used for distillation and 20\% for unsupervised pretraining. The pretraining is done by contrastive representation learning using the Adam optimizer with a learning rate of $10^{-3}.$  

The number of clients is $n=20$ and there is full participation in one round of communication. The training data is split among the clients using a Dirichlet distribution with parameter $\alpha$ as done first in~\cite{hsu2019measuring} and later in~\cite{lin2020ensemble,chen2020feddistill}. With the lowest $\alpha = 0.01$, clients see almost entirely one class of images. With the highest $\alpha=10.24$, each client sees a substantial number of images from every class. See Table \ref{tab:iidness}. We follow \cite{sattler2021fedaux} in their selection of highlighted Dirichlet parameters $\alpha$, who chose $\alpha = 2^n*10^{-2},$ for $n \in \{0, 2, 4, 10\}$.

\begin{table}[h]
    \centering
    \begin{tabular}{lrrrr}
        \toprule
         Class & $\alpha$ = 0.01 & $\alpha$ = 0.04 & $\alpha$  = 0.16 & $\alpha$ = 10.24 \\
         \midrule
         First & 94.5\% & 75.3\% & 56.8\% & 15.1\%\\
         Second & 5.2\% & 16.6\% & 22.3\% & 13.6\%\\
         Third & 0.3\% & 5.6\% & 10.1\%& 12.0\%\\
         \bottomrule
    \end{tabular}
    \caption{Ranked percentage of data coming from the three largest classes for each level of data heterogeneity}
    \label{tab:iidness}
\end{table}

We find the optimal weights of the class model logistic regressions using scikit-learn's LogisticRegression with the L-BFGS~\cite{liu1989limited} optimizer. For baselines, we chose Federated Ensemble Distillation (FedD) and Federated Averaging (FedAVG), which we pretrain (+P) in the same fashion as FedAUXfdp. We also compare FedAUXfdp to FedAUX, but with a frozen feature extractor (+F) for consistency. In FedAUX+F, the clients' local models (linear classification heads) are trained for 40 local epochs. For FedAUX+F, FedAUXfdp, and FedD+P, the full sever model is trained for 10 distillation epochs using the Adam optimizer with a learning rate of $5\cdot 10^{-5}$ and a batch size of 128. For FedAVG+P, the average of the weights of the clients' logistic regressions is used as a classification head on top of the frozen feature extractor on the server. 

For privacy, we chose $(\epsilon=0.1, \delta=10^{-5})$ for the scores and unless otherwise mentioned $(\epsilon=0.5, \delta=10^{-5})$ for the classes. We choose regularization parameter $\lambda=0.01$ for both the certainty score and class models unless otherwise mentioned. 
\begin{table*}[ht]
    \centering
    \scalebox{0.97}{
    \begin{tabular}{lrrrrcrrrr}
        \toprule 
         & \multicolumn{4}{c}{ShuffleNet} & & \multicolumn{4}{c}{MobileNetv2} \\
        \cline{2-5} 
        \cline{7-10} \\
        Method &            $\alpha=0.01$ &   $\alpha=0.04$     &    $\alpha=0.16$ &           $\alpha=10.24$ &      &            $\alpha=0.01$ &   $\alpha=0.04$     &    $\alpha=0.16$ &           $\alpha=10.24$  \\  
        \midrule
        FedAVG+P & 46.0$\pm$ 0.4 & 56.7$\pm$ 6.6 & 67.5$\pm$ 3.5 & 74.1 $\pm$ 1.4 & & 47.2$\pm$ 2.6 & 54.2$\pm$ 5.5 & 65.6$\pm$ 0.9 & 72.0 $\pm$ 0.6  \\
        FedD+P & 41.8 $\pm$ 4.4 & 54.7 $\pm$ 5.0 & 68.8 $\pm$ 2.1 & 72.3 $\pm$ 1.6 & & 43.7 $\pm$ 1.8 & 52.2 $\pm$  4.6 & 67.0 $\pm$  1.7 & 70.8  $\pm$  0.2\\
        FedAUXfdp  & \textbf{75.2 $\pm$ 1.1} & \textbf{74.6 $\pm$ 1.1} & 72.3 $\pm$ 0.6 & 71.7 $\pm$ 1.3 & & \textbf{72.8 $\pm$ 0.4 } & \textbf{72.0 $\pm$ 1.2} & 70.8 $\pm$ 0.2 & 69.4 $\pm$ 0.8  \\
        \bottomrule
    \end{tabular}
    }
    \caption{Server model inference accuracy of \textbf{FedAUXfdp as compared to FL baselines} with pretraining. All methods cumulative privacy loss ($\epsilon=0.6, \delta=2e-05$).}
    \label{tab:baselines}
\end{table*}
As shown in Table~\ref{tab:baselines}, on both ShuffleNet and MobileNetv2 architectures FedAUXfdp significantly outperforms baselines in the most heterogeneous settings ($\alpha= 0.01, 0.04).$ While the baselines undergo a steady reduction in accuracy as client data heterogeneity increases, FedAUXfdp is even improving. As data heterogeneity increases fewer classes per client result in the addition of less noise, see Theorem~\ref{theorem1}.

\begin{table*}[ht]
    \centering
    \scalebox{0.86}{
        \begin{tabular}{llrrrrcrrrr}
            \toprule
             & & \multicolumn{4}{c}{ShuffleNet} & & \multicolumn{4}{c}{MobileNetv2} \\
            \cline{3-6}
            \cline{8-11}\\
            Method &    Class DP &        $\alpha=0.01$ &   $\alpha=0.04$     &    $\alpha=0.16$ &           $\alpha=10.24$ &      &            $\alpha=0.01$ &   $\alpha=0.04$     &    $\alpha=0.16$ &           $\alpha=10.24$ \\
            \midrule
            FedAUX+F & None & 64.8 $\pm$ 1.1 & 64.9 $\pm$ 0.5 & 67.7 $\pm$ 0.8 & 73.4 $\pm$ 0.1 & & 60.1 $\pm$ 1.2 & 61.2 $\pm$ 1.8 & 63.7 $\pm$ 0.8 & 67.5 $\pm$ 0.0 \\
            \midrule
            FedAUXfdp & None & 76.1 $\pm$ 0.3 & 75.6 $\pm$ 0.4 & 75.2 $\pm$ 0.5 & 75.4 $\pm$ 0.1 & & 73.0 $\pm$ 0.5 & 73.3 $\pm$ 0.6 & 73.2 $\pm$ 0.2 & 73.0 $\pm$  0.1 \\
            FedAUXfdp & (1.0, 1e-05) & 75.7 $\pm$ 0.7 & 75.1 $\pm$  0.7 & 74.6  $\pm$ 0.5 & 74.9  $\pm$ 0.2 & & 73.0  $\pm$ 0.4 & 72.7  $\pm$ 1.0 & 72.7  $\pm$ 0.3 & 72.4  $\pm$ 0.0  \\
            FedAUXfdp & (0.5, 1e-05)  & 75.2 $\pm$ 1.1 & 74.6 $\pm$ 1.1 & 72.3 $\pm$ 0.6 & 71.7 $\pm$ 1.3 & & 72.8 $\pm$ 0.4 & 72.0 $\pm$ 1.2 & 70.8 $\pm$ 0.2 & 69.4 $\pm$ 0.8  \\
            FedAUXfdp & (0.1, 1e-05) & 60.8 $\pm$ 2.4 & 59.4 $\pm$ 5.8 & 33.9 $\pm$ 5.4 & 34.6 $\pm$ 3.0 & & 66.4 $\pm$ 3.3 & 53.1 $\pm$ 12.9 & 38.9 $\pm$ 4.4 & 34.9 $\pm$ 3.3 \\
            FedAUXfdp & (0.01, 1e-05) & 36.3 $\pm$ 5.1 & 39.8 $\pm$ 7.5 & 12.6 $\pm$ 5.1 & 11.7 $\pm$ 3.5 & & 44.4 $\pm$ 6.8 & 28.7 $\pm$ 5.1 & 16.6 $\pm$ 5.8 & 11.5 $\pm$ 0.8\\
            \bottomrule
        \end{tabular}
        }
    \caption{FedAUXfdp server model inference accuracy at \textbf{various levels of class differential privacy} $(\epsilon,\delta)$ and \textbf{comparison to FedAUX+F} server model inference accuracy. Scoring model privacy for all methods ($\epsilon=0.1, \delta=1e-05$). }
    \label{tab:fedauxfdp}
\end{table*}

Table~\ref{tab:fedauxfdp} shows the impact on the server model accuracy from the method modifications and from different levels of privacy in FedAUXfdp. The method modifications (FedAUXfdp without the class differential privacy) are an all-around improvement in accuracy over FedAUX+F, especially on non-iid client data. The logistic classification heads outperform the linear ones in a single communication round. For FedAUXfdp with no class differential privacy the results are nearly constant as opposed to FedAUX+F, where one sees the usual improvement as iid-ness increases. 

Privatizing FedAUXfdp at additional epsilon-delta values of $(0.5, 10^{-5})$ results in nearly no reduction in accuracy over FedAUXfdp with no class model privacy. Only at $\epsilon=0.1$ we see a drop in accuracy. With equal regularization, the additional differential privacy impacts the models trained on the non-iid data distributions less than those trained on homogeneous data, again due to the class size term $C$ in the $l_2$-sensitivity from Theorem \ref{theorem1}.

\begin{figure*}[!ht]%
    \centering
    {{\includegraphics[width=4.35cm]{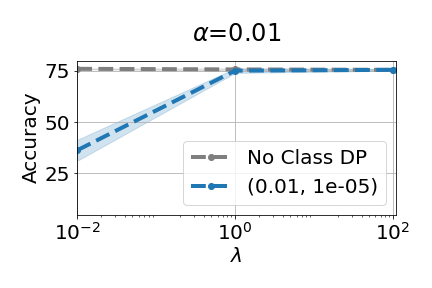} }}%
    {{\includegraphics[width=4.35cm]{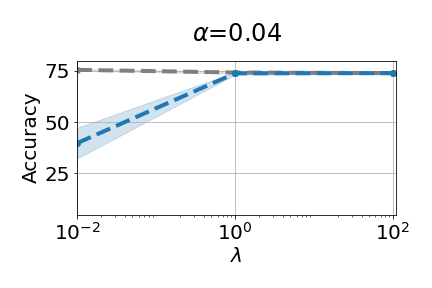} }}%
    {{\includegraphics[width=4.35cm]{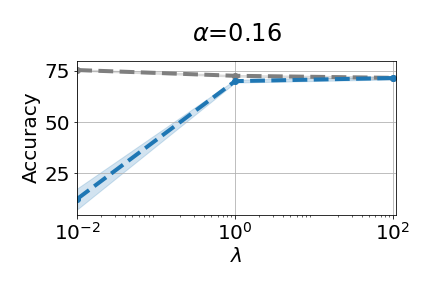} }}%
    {{\includegraphics[width=4.35cm]{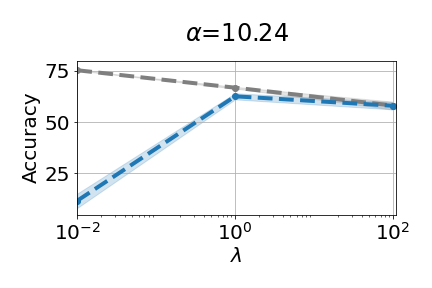} }}%
    \caption{\textbf{Accuracy vs. regularization}. Server model inference accuracy of FedAUXfdp on ShuffleNet with and without $(\epsilon = 0.01, \delta = 1e-05)$ class model differential privacy at various levels of class model regularization $\lambda$. }%
    \label{fig:reg}%
\end{figure*}

\begin{table*}[!ht]
    \centering
    \begin{tabular}{lrrrr}
        \toprule
        Distill Data &            $\alpha=0.01$ &   $\alpha=0.04$     &    $\alpha=0.16$ &           $\alpha=10.24$   \\ 
        \midrule
        STL-10 & 77.2 $\pm$ 0.5 & 75.4 $\pm$ 1.0 & 74.7 $\pm$ 0.9  & 74.4 $\pm$ 0.8 \\
        CIFAR-100 & 70.4 $\pm$ 0.7 & 68.9 $\pm$ 1.8 & 67.6 $\pm$ 1.6 & 68.5 $\pm$ 1.9 \\
        \bottomrule
    \end{tabular}
    \caption{FedAUXfdp server model inference accuracy on ResNet8 with distillation data sharing 9/10 classes (STL-10) versus \textbf{entirely different distillation data classes} (CIFAR-100).}
    \label{tab:distill}
\end{table*}

The drop in accuracy of adding class differential privacy can be partially compensated for by increasing the regularization parameter $\lambda$ of the client models' logistic regressions. Regularization reduces model variance and therefore the impact an individual datapoint has on the model. It thus affects the sensitivity of a differentially private mechanism as in the corollary from \cite{chaudhuri2011differentially}, which our sensitivity theorem generalizes. As shown in Figure~\ref{fig:reg}, on the ShuffleNet model architecture, increasing the regularization from $\lambda=0.01$ to $\lambda=1$ nearly eliminates the gap between the accuracy with and without $(\epsilon = 0.01, \delta = 10^{-5})$ class model differential privacy at all levels of data heterogeneity $\alpha$. The additional regularization does, however, reduce the accuracy of the model without the class differential privacy, moreso the more homogeneous the client data.

Table~\ref{tab:distill} shows results on ResNet with both STL-10 and CIFAR-100 as distillation data. STL-10 and CIFAR10 share 9/10 of the same classes, while CIFAR-100 has completely different classes. Even with distillation classes unmatching client classes, we still see robust results.

\section{Conclusion}

In this work, we have modified the FedAUX method, an augmentation of federated distillation, and made it fully differentially private. We have contributed a mechanism that privatizes respectably with little loss in model accuracy, particularly on non-iid client data. We additionally contributed a theorem for the sensitivity of $l_2$ regularized multinomial logistic regression. On large scale image datasets we have examined the impact of different amounts of differential privacy and regularization. Measuring the impact of federated averaging, distillation, and differential privacy on the attackability of the global server model would be an interesting investigation direction.

\bibliographystyle{named}
\bibliography{references.bib}

\begin{thebibliography}{}

\bibitem[\protect\citeauthoryear{Abadi \bgroup \em et al.\egroup
  }{2016}]{abadi2016deep}
Martin Abadi, Andy Chu, Ian Goodfellow, H~Brendan McMahan, Ilya Mironov, Kunal
  Talwar, and Li~Zhang.
\newblock Deep learning with differential privacy.
\newblock In {\em Proceedings of the 2016 {ACM} {SIGSAC} Conference on Computer
  and Communications Security {(CCS)}}, pages 308--318, 2016.

\bibitem[\protect\citeauthoryear{Chang \bgroup \em et al.\egroup
  }{2019}]{chang2019cronus}
Hongyan Chang, Virat Shejwalkar, Reza Shokri, and Amir Houmansadr.
\newblock Cronus: Robust and heterogeneous collaborative learning with
  black-box knowledge transfer.
\newblock {\em arXiv preprint arXiv:1912.11279}, 2019.

\bibitem[\protect\citeauthoryear{Chaudhuri \bgroup \em et al.\egroup
  }{2011}]{chaudhuri2011differentially}
Kamalika Chaudhuri, Claire Monteleoni, and Anand~D. Sarwate.
\newblock Differentially private empirical risk minimization.
\newblock {\em J. Mach. Learn. Res.}, 12:1069--1109, 2011.

\bibitem[\protect\citeauthoryear{Chen and Chao}{2020}]{chen2020feddistill}
Hong-You Chen and Wei-Lun Chao.
\newblock {FedDistill: M}aking bayesian model ensemble applicable to federated
  learning.
\newblock {\em arXiv preprint arXiv:2009.01974}, 2020.

\bibitem[\protect\citeauthoryear{Choquette-Choo \bgroup \em et al.\egroup
  }{2021}]{choquette2021label}
Christopher~A. Choquette-Choo, Florian Tramer, Nicholas Carlini, and Nicolas
  Papernot.
\newblock Label-only membership inference attacks.
\newblock In {\em Proceedings of the 38th International Conference on Machine
  Learning, PMLR}, volume 139, pages 1964--1974, 2021.

\bibitem[\protect\citeauthoryear{Dwork and Roth}{2014}]{dwork2014algorithmic}
Cynthia Dwork and Aaron Roth.
\newblock The algorithmic foundations of differential privacy.
\newblock {\em Found. Trends Theor. Comput. Sci.}, 9(3-4):211--407, 2014.

\bibitem[\protect\citeauthoryear{Geyer \bgroup \em et al.\egroup
  }{2018}]{geyer2018dpfl}
Robin~C. Geyer, Tassilo Klein, and Moin Nabi.
\newblock Differentially private federated learning: A client level
  perspective.
\newblock {\em arXiv preprint arXiv:1712.07557v2}, 2018.

\bibitem[\protect\citeauthoryear{He \bgroup \em et al.\egroup
  }{2016}]{he2016deep}
Kaiming He, Xiangyu Zhang, Shaoqing Ren, and Jian Sun.
\newblock Deep residual learning for image recognition.
\newblock In {\em Proceedings of the IEEE Conference on Computer Vision and
  Pattern Recognition {(CVPR)}}, pages 770--778, 2016.

\bibitem[\protect\citeauthoryear{Hsu \bgroup \em et al.\egroup
  }{2019}]{hsu2019measuring}
Tzu-Ming~Harry Hsu, Hang Qi, and Matthew Brown.
\newblock Measuring the effects of non-identical data distribution for
  federated visual classification.
\newblock {\em arXiv preprint arXiv:1909.06335}, 2019.

\bibitem[\protect\citeauthoryear{Itahara \bgroup \em et al.\egroup
  }{2020}]{itahara2020distill}
Sohei Itahara, Takayuki Nishio, Yusuke Koda, Masahiro Morikura, and Koji
  Yamamoto.
\newblock Distillation-based semi-supervised federated learning for
  communication-efficient collaborative training with non-iid private data.
\newblock {\em arXiv preprint arXiv:2008.06180}, 2020.

\bibitem[\protect\citeauthoryear{Kairouz \bgroup \em et al.\egroup
  }{2021}]{kairouz2020advances}
Peter Kairouz, H.~Brendan McMahan, Brendan Avent, Aur\`{e}lien Bellet, and
  Mehdi Bennis.
\newblock Advances and open problems in federated learning.
\newblock In {\em Foundations and Trends in Machine Learning}, volume~14, pages
  1--210, 2021.

\bibitem[\protect\citeauthoryear{Kasiviswanathan \bgroup \em et al.\egroup
  }{2008}]{kasiviswanathan2008dp}
Shiva~Prasad Kasiviswanathan, Homin~K. Lee, Kobbi Nissim, Sofya Raskhodnikova,
  and Adam Smith.
\newblock What can we learn privately?
\newblock In {\em 2008 49th Annual IEEE Symposium on Foundations of Computer
  Science}, pages 531--540, 2008.

\bibitem[\protect\citeauthoryear{Li \bgroup \em et al.\egroup
  }{2019}]{li2019federated}
Qinbin Li, Zeyi Wen, and Bingsheng He.
\newblock Federated learning systems: Vision, hype and reality for data privacy
  and protection.
\newblock {\em arXiv preprint arXiv:1907.09693}, 2019.

\bibitem[\protect\citeauthoryear{Li \bgroup \em et al.\egroup
  }{2020}]{li2020covergence}
Xiang Li, Kaixuan Huang, Wenhao Yang, Shusen Wang, and Zhihua Zhang.
\newblock On the convergence of {FedAvg} on non-iid data.
\newblock In {\em Proceedings of 8th International Conference on Learning
  Representations {(ICLR)}}. OpenReview.net, 2020.

\bibitem[\protect\citeauthoryear{Li \bgroup \em et al.\egroup
  }{2021}]{li2021fedh2l}
Yiying Li, Wei Zhou, Huaimin Wang, Haibo Mi, and Timothy~M Hospedales.
\newblock Fedh2l: Federated learning with model and statistical heterogeneity.
\newblock {\em arXiv preprint arXiv:2101.11296}, 2021.

\bibitem[\protect\citeauthoryear{Lin \bgroup \em et al.\egroup
  }{2020}]{lin2020ensemble}
Tao Lin, Lingjing Kong, Sebastian~U. Stich, and Martin Jaggi.
\newblock Ensemble distillation for robust model fusion in federated learning.
\newblock In {\em Advances in Neural Information Processing Systems
  {(NeurIPS)}}, volume~33, 2020.

\bibitem[\protect\citeauthoryear{Liu and Nocedal}{1989}]{liu1989limited}
Dong~C. Liu and Jorge Nocedal.
\newblock On the limited memory {BFGS} method for large scale optimization.
\newblock {\em Math. Program.}, 45(1-3):503--528, 1989.

\bibitem[\protect\citeauthoryear{McMahan \bgroup \em et al.\egroup
  }{2017}]{mcmahan2017communication}
Brendan McMahan, Eider Moore, Daniel Ramage, Seth Hampson, and
  Blaise~Ag{\"{u}}era y~Arcas.
\newblock Communication-efficient learning of deep networks from decentralized
  data.
\newblock In {\em Proceedings of the 20th International Conference on
  Artificial Intelligence and Statistics {(AISTATS)}}, pages 1273--1282, 2017.

\bibitem[\protect\citeauthoryear{McMahan \bgroup \em et al.\egroup
  }{2018}]{mcmahan2018dp}
Brendan McMahan, Daniel Ramage, Kunal Talwar, and Li~Zhang.
\newblock Learning differentially private recurrent language models.
\newblock In {\em Proceedings of the 8th International Conference on Learning
  Representations {(ICLR)}}, 2018.

\bibitem[\protect\citeauthoryear{Papernot \bgroup \em et al.\egroup
  }{2017}]{papernot2017semisupervized}
Nicolas Papernot, Mart\'{i}n Abadi, \'{U}lfar Erlingsson, Ian Goodfellow, and
  Kunal Talwar.
\newblock Semi-supervized knowledge transfer for deep learning from private
  training data.
\newblock In {\em Proceedings of the 5th International Conference on Learning
  Representations {(ICLR)}}. OpenReview.net, 2017.

\bibitem[\protect\citeauthoryear{Sandler \bgroup \em et al.\egroup
  }{2018}]{sandler2018mobilenetv2}
Mark Sandler, Andrew~G. Howard, Menglong Zhu, Andrey Zhmoginov, and
  Liang{-}Chieh Chen.
\newblock {MobileNetV2: I}nverted residuals and linear bottlenecks.
\newblock In {\em Proceedings of the {IEEE} Conference on Computer Vision and
  Pattern Recognition {(CVPR)}}, pages 4510--4520, 2018.

\bibitem[\protect\citeauthoryear{Sattler \bgroup \em et al.\egroup
  }{2021a}]{sattler2021fedaux}
Felix Sattler, Tim Korjakow, Roman Rischke, and Wojciech Samek.
\newblock Fedaux: Leveraging unlabeled auxiliary data in federated learning.
\newblock In {\em IEEE Transactions on Neural Networks and Learning Systems},
  2021.

\bibitem[\protect\citeauthoryear{Sattler \bgroup \em et al.\egroup
  }{2021b}]{sattler2020communication}
Felix Sattler, Arturo Marban, Roman Rischke, and Wojciech Samek.
\newblock Cfd: Communication-efficient federated distillation via soft-label
  quantization and delta coding.
\newblock {\em IEEE Trans. Netw. Sci. Eng.}, 2021.

\bibitem[\protect\citeauthoryear{Sattler \bgroup \em et al.\egroup
  }{2021c}]{sattler2020clustered}
Felix Sattler, Klaus-Robert Müller, and Wojciech Samek.
\newblock Clustered federated learning: Model-agnostic distributed multitask
  optimization under privacy constraints.
\newblock {\em IEEE Transactions on Neural Networks and Learning Systems},
  32(8):3710--3722, 2021.

\bibitem[\protect\citeauthoryear{Shalev-Shwartz}{2007}]{shalev2007online}
Shai Shalev-Shwartz.
\newblock {\em Online Learning: Theory, Algorithms, and Applications}.
\newblock PhD thesis, Hebrew University, 2007.

\bibitem[\protect\citeauthoryear{Shokri \bgroup \em et al.\egroup
  }{2017}]{shokri2017mi}
Reza Shokri, Marco Stronati, Congzheng Song, and Vitaly Shmatikov.
\newblock Membership inference attacks against machine learning models.
\newblock In {\em IEEE Symposium on Security and Privacy}, pages 3--18, 2017.

\bibitem[\protect\citeauthoryear{Sun and Lyu}{2021}]{sun2020federated}
Lichao Sun and Lingjuan Lyu.
\newblock Federated model distillation with noise-free differential privacy.
\newblock In {\em Proceedings of the Thirtieth International Joint Conference
  on Artificial Intelligence (IJCAI-21)}, 2021.

\bibitem[\protect\citeauthoryear{Wang \bgroup \em et al.\egroup
  }{2018}]{WangEtAl2018}
Tongzhou Wang, Jun-Yan Zhu, Antonio Torralba, and Alexei~A. Efros.
\newblock Dataset distillation.
\newblock {\em arXiv preprint arXiv:1811.10959}, 2018.

\bibitem[\protect\citeauthoryear{Zhang \bgroup \em et al.\egroup
  }{2018}]{zhang2018ShuffleNet}
Xiangyu Zhang, Xinyu Zhou, Mengxiao Lin, and Jian Sun.
\newblock {ShuffleNet: A}n extremely efficient convolutional neural network for
  mobile devices.
\newblock In {\em Proceedings of the IEEE Conference on Computer Vision and
  Pattern Recognition {(CVPR)}}, pages 6848--6856, 2018.

\bibitem[\protect\citeauthoryear{Zhou \bgroup \em et al.\egroup
  }{2021}]{ZhouEtAl2021}
Yanlin Zhou, George Pu, Xiyao Ma, Xiaolin Li, and Dapeng Wu.
\newblock Distilled one-shot federated learning.
\newblock {\em arXiv preprint arXiv:2009.07999}, 2021.

\end{thebibliography}
\end{document}